\newtheorem{definition}{Definition}
\begin{document}

\title{\Large Large Scale Spectral Clustering \\ Using Approximate Commute Time Embedding}
\author{Nguyen Lu Dang Khoa \thanks{School of IT, University of Sydney, Australia. Email: khoa@it.usyd.edu.au}  \\
\and
Sanjay Chawla\thanks{School of IT, University of Sydney, Australia. Email: sanjay.chawla@sydney.edu.au}}

\date{}

\maketitle

%\pagenumbering{arabic}
%\setcounter{page}{1}%Leave this line commented out.

\begin{abstract} \small\baselineskip=9pt
Spectral clustering is a novel clustering method which can detect complex shapes of data clusters. However, it requires the eigen decomposition of the graph Laplacian matrix, which is proportion to $O(n^3)$ and thus is not suitable for large scale systems. Recently, many methods have been proposed to accelerate the computational time of spectral clustering. These approximate methods usually involve sampling techniques by which a lot information of the original data may be lost. In this work, we propose a fast and accurate spectral clustering approach using an approximate commute time embedding, which is similar to the spectral embedding. The method does not require using any sampling technique and computing any eigenvector at all. Instead it uses random projection and a linear time solver to find the approximate embedding. The experiments in several synthetic and real datasets show that the proposed approach has better clustering quality and is faster than the state-of-the-art approximate spectral clustering methods.

\textbf{Keyword:} spectral clustering, commute time embedding, random projection, linear time solver
\end{abstract}

\section{Introduction}
\label{chapter:intro}
Data clustering is an important problem and has been studied extensively in data mining research \cite{jain1999}. Traditional methods such as $k$-means or hierarchical techniques usually assume that data has clusters of convex shapes so that using Euclidean distance they can linearly separate them. On the other hand, spectral clustering can detect clusters of more complex geometry and has been shown to be more effective than traditional techniques in different application domains \cite{ng2001,shi2000,luxburg2007}. The intuition of spectral clustering is that it maps the data in the original feature space to the eigenspace of the Laplacian matrix where we can linearly separate the clusters and thus the clusters are easier to be detected using traditional techniques like $k$-means. This technique requires the eigen decomposition of the graph Laplacian which is proportional to $O(n^3)$ and is not applicable for large graphs.

Recent studies try to solve this problem by accelerating the eigen decomposition step. They either involves sampling or low-rank matrix approximation techniques \cite{fowlkes2004,wang2009,yan2009,chen2011}. \cite{fowlkes2004} used traditional Nystr\"{o}m method to solve the eigensystem solution on data representatives which were sampled randomly and then extrapolated the solution for the whole dataset. \cite{yan2009} performed the spectral clustering on a small set of data centers chosen by $k$-means or a random projection tree.  Then all data points were assigned to clusters corresponding to its centers in the center selection step. A recent work in \cite{chen2011} used the idea of sparse coding to approximate the affinity matrix based on a number of data representatives so that they can compute the eigensystem very efficiently. However, all of them involve sampling techniques. Although the samples or representatives are chosen uniformly at random or by using a more expensive selection, it may not completely represent the whole dataset and may not correctly capture the cluster geometry structures. Moreover, all of them involves computing the eigenvectors of the Laplacian and cannot be used directly in graph data which are popularly available such as social networks, web graphs, and collaborative filtering graphs.

In this paper, we propose a different approach using an approximate commute time embedding. Commute time is a random walk based metric on graphs. The commute time between two nodes $i$ and $j$ is the expected number of steps a random walk starting at $i$ will take to reach $j$ for the first time and then return back to $i$. The fact that commute time is averaged over all paths (and not just the shortest path) makes it more robust to data perturbations. Commute time has found widespread applications in personalized search \cite{sarkar2008}, collaborative filtering \cite{brand2005,fouss2007}, anomaly detection \cite{khoa2010}, link prediction in social network \cite{liben-nowell2003}, and making search engines robust against manipulation \cite{hopcroft2007}. Commute time can be embedded in an eigenspace of the graph Laplacian matrix where the square pairwise Euclidean distances are the commute time in the similarity graph \cite{fouss2007}. Therefore, the clustering using commute time embedding has similar idea to spectral clustering and they have quite similar clustering capability.

Another kind of study in \cite{mavroeidis2010} proposed a semi-supervised framework using data labels to improve the efficiency of the power method in finding eigenvectors for spectral clustering. Alternatively, \cite{chen2010} used parallel processing to accelerate spectral clustering in a distributed environment. In our work, we only focus on the acceleration of spectral clustering using a single machine in an unsupervised manner.

The contributions of this paper are as follows:
\begin{itemize}
    \item We show the similarity in idea and implementation between spectral clustering and clustering using commute time embedding. The experiments show that they have quite similar clustering capabilities.
    \item We show the weakness of sampling-based approximate approaches and propose a fast and accurate spectral clustering method using approximate commute time embedding. This does not sample the data, does not compute any eigenvector, and can work directly in graph data. Moreover, the approximate embedding can be applied to different other applications which utilized the commute time.
    \item We show the effectiveness of the proposed methods in terms of accuracy and performance in several synthetic and real datasets. It is more accurate and faster than the state-of-the-art approximate spectral clustering methods.
\end{itemize}

The remainder of the paper is organized as follows. Sections \ref{chapter:spectral} and \ref{chapter:related} describe the spectral clustering technique and efforts to approximate it to reduce the computational time. Section \ref{chapter:CTD} reviews notations and concepts related to commute time and its embedding, and the relationship between spectral clustering and clustering using commute time embedding. In Section \ref{chapter:CDST}, we present a method to approximate spectral clustering with an approximate commute time embedding. In Section \ref{chapter:expres}, we evaluate our approach using experiments on several synthetic and real datasets. Sections \ref{chapter:discussion} covers the discussion of the related issues. We conclude in Section \ref{chapter:conclusion} with a summary and a direction for future research.

\section{Spectral Clustering}
\label{chapter:spectral}

Given a dataset $X \in \mathbb{R}^{d}$ with $n$ data points $x_1, x_2, \ldots, x_n$ and $d$ dimensions, we define an undirected and weighted graph $G$. Let $A=w_{ij} (1 \leq i,j \leq n)$ be the affinity matrix of $G$.

Let $i$ be a node in $G$ and $N(i)$ be its neighbors. The {\it degree} $d_{i}$ of a node $i$ is $\sum_{j \in N(i)}w_{ij}$. The {\it volume} $V_{G}$ of the graph is defined as $\sum_{i=1}^{n}d_{i}$. Let $D$ be the diagonal degree matrix with diagonal entries $d_i$. The Laplacian of $G$ is the matrix $L = D - A$.

Spectral clustering assigns each data point in $X$ to one of $k$ clusters. The details are in Algorithm \ref{algo:spectral}.

\begin{algorithm}[h!]
\caption{Spectral Clustering}
\label{algo:spectral}
\textbf{Input:} Data matrix $X \in \mathbb{R}^{d}$, number of clusters $k$\\
\textbf{Output:} Cluster membership for each data point \\
\begin{algorithmic}[1]
\STATE Construct a similarity graph $G$ from $X$ and compute its Laplacian matrix $L$ \\
\STATE Compute the first $k$ eigenvectors of $L$. \\
\STATE Let $U \in \mathbb{R}^{k}$ be the eigenspace containing these $k$ vectors as columns and each row of $U$ corresponds to a data point in $X$.\\
\STATE Cluster the points in $U$ using $k$-means clustering.
\end{algorithmic}
\end{algorithm}

There are three typical similarity graphs: the $\varepsilon$-neighborhood graph (connecting nodes whose distances are shorter than $\varepsilon$), the fully connected graph (connecting all nodes with each other), and the $k$-nearest neighbor graph (connecting nodes $u$ and $v$ if $u$ belongs to $k$ nearest neighbors of $v$ \emph{or} $v$ belongs to $k$ nearest neighbors of $u$) \cite{luxburg2007}. The $\varepsilon$-neighborhood graph and $k$-nearest neighbor graph ($k \ll n$) are usually sparse, which have advantages in computation. The typical similarity function is the Gaussian kernel function $w_{ij} = e^{-\frac{\|x_i-x_j\|^2}{2\sigma^2}}$ where $\sigma$ is the kernel bandwidth.

Algorithm \ref{algo:spectral} shows that spectral clustering transforms the data from its original space to the eigenspace of the Laplacian matrix and uses $k$-means to cluster data in that space. The representation in the new space enhances the cluster properties in the data  so that the clusters can be linearly separated \cite{luxburg2007}. Therefore, traditional technique like $k$-means can easily cluster data in the new space.

We can use the normalized Laplacian matrix and its corresponding eigenvectors as the eigenspace. Shi and Malik \cite{shi2000} computed the first $k$ eigenvectors of the generalized eigensystem as the eigenspace. These eigenvectors are in fact the eigenvectors of the normalized Laplacian $L_n=D^{-1}L$ \cite{luxburg2007}. Ng, Jordan, and Weiss use $k$ eigenvectors of the normalized Laplacian $L_n=D^{-1/2}LD^{-1/2}$ as the eigenspace. It then requires the normalization of each row in the new space to norm 1 \cite{ng2001}.

\section{Related Work}
\label{chapter:related}
The spectral clustering method described in previous section involves the eigen decomposition of the (normalized) Laplacian matrix. It takes $O(n^3)$ time and is not feasible to do for large graphs. Even if we can reduce it by using a sparse similarity graph and a sparse eigen decomposition algorithm with an iterative approach, it is still expensive for large graphs.

Most of the approaches try to approximate spectral clustering using sampling or low-rank approximation techniques. \cite{fowlkes2004} used Nystr\"{o}m technique and \cite{wang2009} used column sampling to solve the eigensystem in a smaller sample and extrapolated the solution for the whole dataset.

\cite{yan2009} provided a framework for a fast approximate spectral clustering. A number of centers were chosen from the data by using $k$-means or a random projection tree. Then these centers were clustered by the spectral clustering. The cluster membership for each data point corresponding to its center was assigned using the spectral clustering membership in the center set. However, the center selection step is time consuming for large datasets.

\cite{chen2011} used the idea of sparse coding to design an approximate affinity matrix $A = ZZ^T (Z \in \mathbb{R}^{s}$ where $s$ is the number of representatives, or landmarks in their word) so that the eigen decomposition of an $(n \times n)$ matrix $A$ can be found from the eigen decomposition of a smaller $(s \times s)$ matrix $Z^TZ$. Since the smallest eigenvectors of $L_n=D^{-1/2}LD^{-1/2}$ are the largest eigenvectors of $D^{-1/2}AD^{-1/2}$, we have the eigen solution of $L_n$.  $s$ landmarks can be selected by random sampling or by $k$-means method. They claimed that choosing the landmarks by randomly sampling is a balance between accuracy and performance.

However, all these methods involve data sampling either by choosing randomly or by a $k$-means selection. Using $k$-means or other methods to select the representative centers is costly in large datasets since the number of representatives cannot be too small. Moreover, any kind of sampling will suffer from the lost of information in the original data since the representatives may not completely represent the whole dataset and may not correctly capture the cluster geometry structures. Therefore, any approximation based on these representatives also suffers from this information lost. These facts will be illustrated in the experiments. Moreover, these approximations cannot be used directly for graph data.

\section{Commute Time Embedding and Spectral Clustering}
\label{chapter:CTD}

This section reviews the concept of the commute time, its embedding and the relationship between clustering in the commute time embedding and spectral clustering.

\begin{definition} The Hitting Time $h_{ij}$ is the expected number of steps that a random walk starting at $i$ will take before reaching $j$ for the first time.
\end{definition}

\begin{definition}The Hitting Time can be defined in terms of the recursion
\begin{displaymath}
h_{ij} = \begin{cases}
1 + \sum_{l \in N(i)}p_{il}h_{lj} & \mbox{ if $ i \neq j$} \\
0 & \mbox{otherwise}
\end{cases}
\end{displaymath}
where
\begin{displaymath}
p_{ij} = \begin{cases}
w_{ij}/d_{i} & \mbox{ if $(i,j)$ belong to an edge} \\
0 & \mbox{ otherwise }
\end{cases}.
\end{displaymath}
\end{definition}

\begin{definition} The Commute Time $c_{ij}$  between two nodes $i$ and $j$ is given by $c_{ij} = h_{ij} + h_{ji}$.
\end{definition}

\begin{fact} Commute time is a metric: (i) $c_{ii} = 0$, (ii) $c_{ij} = c_{ji}$ and (iii) $c_{ij} \leq c_{ik} + c_{kj}$ \cite{klein1993}.
\end{fact}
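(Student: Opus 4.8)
The plan is to dispose of (i) and (ii) directly from the definition and to concentrate all the effort on the triangle inequality (iii), which is the only substantive claim. For (i), the recursion for the hitting time gives $h_{ii}=0$, so $c_{ii}=h_{ii}+h_{ii}=0$. For (ii), the definition $c_{ij}=h_{ij}+h_{ji}$ is manifestly symmetric in $i$ and $j$, so $c_{ij}=c_{ji}$. Neither requires any work.

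The triangle inequality is where the content lies, and the first thing to note is \emph{why} it is delicate: the one-sided hitting times $h_{ij}$ do \emph{not} obey a triangle inequality, and even the embedding in which $c_{ij}$ appears as a squared Euclidean distance does not help, since squared distances are not metrics in general. So a direct manipulation of the defining recursion looks unpromising. The approach I would take instead is to pass to the electrical-network interpretation. View $G$ as a resistor network in which the edge $(i,j)$ carries conductance $w_{ij}$, and let $R_{ij}$ denote the effective resistance between $i$ and $j$. The classical identity of Chandra et al.\ states that $c_{ij}=V_G\,R_{ij}$, with $V_G=\sum_k d_k$ a fixed positive constant independent of the pair. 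Establishing (or citing) this identity reduces (iii) to the statement that effective resistance is itself a metric, i.e.\ $R_{ik}\le R_{ij}+R_{jk}$.

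To prove the inequality for $R$, I would use superposition of currents together with the maximum principle. Let $\phi$ be the node potentials when one unit of current is injected at $i$ and withdrawn at $j$, so that $R_{ij}=\phi(i)-\phi(j)$, and let $\psi$ be the potentials for one unit injected at $j$ and withdrawn at $k$, so $R_{jk}=\psi(j)-\psi(k)$. Superimposing the two current patterns yields exactly one unit of current from $i$ to $k$ (the $\pm 1$ contributions at $j$ cancel), so by linearity the combined potential is $\phi+\psi$ and $R_{ik}=[\phi(i)-\phi(k)]+[\psi(i)-\psi(k)]$. Since $\phi$ is harmonic away from its source $i$ and sink $j$, the maximum principle forces $\phi(i)\ge\phi(k)\ge\phi(j)$, hence $\phi(i)-\phi(k)\le\phi(i)-\phi(j)=R_{ij}$; symmetrically $\psi(i)-\psi(k)\le\psi(j)-\psi(k)=R_{jk}$. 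Adding the two bounds gives $R_{ik}\le R_{ij}+R_{jk}$, and multiplying through by $V_G$ recovers (iii).

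I expect the main obstacle to be the reduction step rather than the final inequality: the clean superposition argument only becomes available once $c_{ij}$ has been identified with a genuine resistance distance, so the crux is setting up the electrical analogy (the relation $c_{ij}=V_G R_{ij}$ together with the associated harmonic/maximum-principle machinery) rigorously. Once that identification is in hand, the metric axioms follow as above, and this is presumably the role played by the cited result of Klein and Randi\'c.
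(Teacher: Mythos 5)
Your proof is correct, but note that the paper offers no proof of this Fact at all --- it is stated with a bare citation to Klein and Randi\'c, and your argument (pass to effective resistance via $c_{ij}=V_G R_{ij}$, then prove the triangle inequality for $R$ by superposition of unit current flows and the maximum principle for harmonic functions) is essentially the content of that cited result, so you have correctly reconstructed the intended proof. The reduction you worry about is also already present in the paper: Equation (1) gives $c_{ij}=V_G(e_i-e_j)^T L^+(e_i-e_j)$, and $(e_i-e_j)^T L^+(e_i-e_j)$ is exactly the effective resistance, so the identification costs nothing extra. One correction, though: your motivating claim that the one-sided hitting times do not obey a triangle inequality is false. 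By the strong Markov property, the first hitting time of $k$ from $i$ is bounded by the first hitting time of $j$ plus the subsequent hitting time of $k$ from $j$, so $h_{ik}\le h_{ij}+h_{jk}$ (hitting time is a quasi-metric; it merely fails symmetry). This yields a one-line elementary proof of (iii): $c_{ik}=h_{ik}+h_{ki}\le(h_{ij}+h_{jk})+(h_{kj}+h_{ji})=c_{ij}+c_{jk}$. So the electrical-network machinery, while perfectly valid and standard, is not actually forced on you; the probabilistic route is both shorter and closer to the paper's own definition of commute time via the hitting-time recursion.
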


%\begin{definition}Let $D$ be the diagonal degree matrix and $A$ be the adjacency matrix of $G$. The Laplacian of $G$ is the matrix $L = D - A$.
%\end{definition}
\begin{fact}
\begin{enumerate}
\item
Let $e_{i}$ be the $n$ dimensional column vector with a 1 at location $i$ and zero elsewhere.
\item Let $(\lambda_{i}, v_{i})$ be the eigenpair of $L$ for all nodes $i$, i.e., $Lv_{i} = \lambda_{i}v_{i}$.
\item It is well known that $\lambda_{1}=0, v_{1} =(1,1,\ldots,1)^{T}$ and all $\lambda_{i} \geq 0$.
\item Assume $0 = \lambda_{1} \leq \lambda_{2} \ldots \leq \lambda_{n}$.
\item The eigen decomposition of the Laplacian is $L=VSV^T$ where $S=diag(\lambda_{1}, \lambda_{2}, \ldots, \lambda_{n})$ and $V=(v_1, v_2, \ldots, v_n)$.
\item Then the pseudo-inverse of $L$ denoted by $L^{+}$ is
\[
L^{+} = \sum_{i=2}^{n}\frac{1}{\lambda_{i}}v_{i}v_{i}^{T}
\]
\end{enumerate}
\end{fact}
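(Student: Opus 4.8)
The plan is to recognize that for a real symmetric matrix the Moore--Penrose pseudo-inverse is obtained by inverting the nonzero eigenvalues while annihilating the zero eigenspace, and then to verify the claimed formula directly from the orthonormal eigenbasis supplied by the decomposition $L = VSV^T$. Since $L = D-A$ is symmetric, the eigenvectors $v_1,\ldots,v_n$ may be taken orthonormal, so that $v_i^T v_j = \delta_{ij}$ and $\sum_{i=1}^{n} v_i v_i^T = I$; I would take these two identities as the starting point of the argument.

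First I would set $M = \sum_{i=2}^{n}\frac{1}{\lambda_i} v_i v_i^T$ as the candidate pseudo-inverse and note that it is well defined precisely because $\lambda_2,\ldots,\lambda_n$ are strictly positive. This is where connectivity of the similarity graph enters: it guarantees that $\lambda_1 = 0$ is a simple eigenvalue, so that the kernel of $L$ is exactly $\mathrm{span}(v_1)$ and no other $\lambda_i$ vanishes. Using orthonormality I would then compute
\[
LM = \Big(\sum_{i=1}^{n}\lambda_i v_i v_i^T\Big)\Big(\sum_{j=2}^{n}\tfrac{1}{\lambda_j} v_j v_j^T\Big) = \sum_{i=2}^{n} v_i v_i^T = I - v_1 v_1^T ,
\]
and symmetrically $ML = I - v_1 v_1^T$; the cross terms collapse because $v_i^T v_j = \delta_{ij}$, and the $i=1$ contribution drops out since $\lambda_1 = 0$.

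With these products in hand I would verify the four defining Penrose conditions. Both $LM$ and $ML$ equal the orthogonal projector $I - v_1 v_1^T$, which is manifestly symmetric, so the symmetry conditions $(LM)^T = LM$ and $(ML)^T = ML$ hold. For the first reproducing identity, using $L v_1 = \lambda_1 v_1 = 0$ and hence $v_1^T L = 0$ by symmetry, I obtain $LML = (I - v_1 v_1^T)L = L$. For the second, since $M v_1 = \sum_{j\geq 2}\frac{1}{\lambda_j} v_j (v_j^T v_1) = 0$, I obtain $MLM = M(I - v_1 v_1^T) = M$. As $M$ satisfies all four conditions, it is the unique Moore--Penrose pseudo-inverse $L^{+}$, which is the assertion.

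The step demanding the most care is the implicit connectivity hypothesis rather than any of the algebra: starting the sum at $i=2$ is correct only when the graph is connected, so that $0$ is a simple eigenvalue and $v_1 = (1,\ldots,1)^T$ (suitably normalized) spans the whole kernel. If the graph had $c$ connected components, $0$ would have multiplicity $c$ and the sum would instead need to omit all $c$ zero modes, replacing $I - v_1 v_1^T$ by the projector onto the orthogonal complement of the full kernel. I would therefore state the connectivity assumption explicitly; the remainder is routine bookkeeping with the orthonormal basis.
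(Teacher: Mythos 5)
Your proof is correct. The paper states this as a background Fact and offers no proof of its own, so there is nothing to compare against: your verification of the four Penrose conditions for $M=\sum_{i\geq 2}\lambda_i^{-1}v_iv_i^T$ is the standard argument, and the algebra ($LM=ML=I-v_1v_1^T$, then $LML=L$ and $MLM=M$ using $Lv_1=0$ and $Mv_1=0$) is sound, granted that $v_1$ is normalized to unit length so that $V$ is orthogonal --- a normalization the paper's own statement $v_1=(1,\ldots,1)^T$ glosses over and which you rightly flag. Your observation that the sum starting at $i=2$ presupposes a connected graph (so that $0$ is a simple eigenvalue) identifies a genuine implicit hypothesis: the paper never states it in this section, though it is consistent with the rest of the setup, since commute times are only finite on connected graphs and the experiments explicitly extract the largest connected component before running the method.
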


Remarkably, the commute time can be expressed in terms of the Laplacian of $G$ \cite{fouss2007,doyle1984}.
\begin{fact}
\begin{equation}
\label{equa1}
c_{ij} = V_{G}(l_{ii}^{+}+l_{jj}^{+}-2l_{ij}^{+}) = V_{G}(e_{i}-e_{j})^{T}L^{+}(e_{i}-e_{j})
\end{equation}
where $l_{ij}^{+}$ is the $(i,j)$ element of $L^{+}$ \cite{fouss2007}.
\end{fact}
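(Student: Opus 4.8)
The plan is to reduce the commute time to a single linear system driven by the Laplacian and then to invert it with the pseudoinverse $L^{+}$ recorded above. Fix the target node $j$ and gather the hitting times into the column vector $h^{(j)} = (h_{1j}, h_{2j}, \ldots, h_{nj})^{T}$. First I would rewrite the hitting-time recursion: for every $i \neq j$ we have $h_{ij} = 1 + \sum_{l \in N(i)} (w_{il}/d_{i}) h_{lj}$, and multiplying by $d_{i}$ and rearranging gives $d_{i} h_{ij} - \sum_{l} w_{il} h_{lj} = d_{i}$. The left-hand side is exactly the $i$-th entry of $(D-A)h^{(j)} = L h^{(j)}$, so $(L h^{(j)})_{i} = d_{i}$ holds for all $i \neq j$.

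The recursion leaves the $j$-th coordinate undetermined, and recovering it is the first delicate step. I would use that $\mathbf{1}$ spans the null space of $L$ (recall $\lambda_{1}=0$, $v_{1}=\mathbf{1}$), so $\mathbf{1}^{T} L h^{(j)} = (L\mathbf{1})^{T} h^{(j)} = 0$; summing the already-known coordinates then forces $(L h^{(j)})_{j} = -(V_{G}-d_{j})$. The $n$ scalar identities combine into the vector equation $L h^{(j)} = d - V_{G} e_{j}$, where $d=(d_{1},\ldots,d_{n})^{T}$ is the degree vector. Since $\mathbf{1}^{T}(d - V_{G}e_{j}) = V_{G}-V_{G} = 0$, the right-hand side lies in the range of $L$ and the system is consistent.

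With this equation established, I would invert it through $L^{+}$: the general solution is $h^{(j)} = L^{+}(d - V_{G} e_{j}) + c\,\mathbf{1}$ for a free scalar $c$, reflecting $L^{+}\mathbf{1}=0$. Imposing the boundary condition $h_{jj}=e_{j}^{T} h^{(j)}=0$ eliminates $c$ and yields $h_{ij} = (e_{i}-e_{j})^{T} L^{+}(d - V_{G} e_{j})$. Writing the analogous expression for $h_{ji}$ and adding, the two terms carrying $L^{+}d$ appear with opposite signs and cancel, leaving $c_{ij} = h_{ij}+h_{ji} = V_{G}(e_{i}-e_{j})^{T} L^{+}(e_{i}-e_{j})$; expanding the quadratic form and using the symmetry $l_{ij}^{+}=l_{ji}^{+}$ recovers $l_{ii}^{+}+l_{jj}^{+}-2l_{ij}^{+}$. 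I expect the main obstacle to be the clean derivation of $L h^{(j)} = d - V_{G} e_{j}$ --- justifying the $j$-th coordinate and the consistency of the pseudoinverse inversion --- since the concluding cancellation and expansion are routine once the vector identity is in place.
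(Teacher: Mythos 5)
Your proposal is correct. Note, however, that the paper offers no proof of this statement at all: it is stated as a Fact and attributed to \cite{fouss2007,doyle1984}, so there is nothing internal to compare against. What you have written is essentially the standard derivation found in those references, and it checks out step by step: multiplying the hitting-time recursion by $d_i$ does give $(Lh^{(j)})_i = d_i$ for $i \neq j$; the $j$-th coordinate is correctly recovered from $\mathbf{1}^T L h^{(j)} = 0$, yielding $Lh^{(j)} = d - V_G e_j$; consistency follows because the right-hand side is orthogonal to $\mathbf{1}$ and, for a symmetric $L$, the range is the orthogonal complement of the null space; the boundary condition $h_{jj}=0$ fixes the free constant; and the $L^{+}d$ terms indeed cancel when you symmetrize to form $c_{ij}$. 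Two small points worth making explicit: (i) the argument needs $G$ connected so that the null space of $L$ is exactly $\mathrm{span}\{\mathbf{1}\}$ and the general solution has the one-parameter form you use (the paper implicitly assumes this, e.g.\ it extracts the largest connected component of its graph data); and (ii) you should say a word about why $h^{(j)}$ exists and is finite in the first place --- on a finite connected graph all hitting times are finite, so the vector satisfying the recursion is well defined before you solve the linear system for it. With those caveats recorded, the derivation is complete and supplies a proof the paper itself omits.
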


\begin{theorem}
$\theta = \sqrt{V_G}VS^{-1/2} \in \mathbb{R}^{n}$ is a commute time embedding where the square root of the commute time $\sqrt{c_{ij}}$ is an Euclidean distance between $i$ and $j$ in $\theta$.
\end{theorem}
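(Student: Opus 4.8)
The plan is to prove the stronger statement that the \emph{squared} Euclidean distance between the $i$-th and $j$-th rows of $\theta$ equals the commute time $c_{ij}$, and then take square roots. First I would identify the embedding of node $i$ with the $i$-th row of $\theta$, namely $\theta_i = \sqrt{V_G}\, e_i^T V S^{-1/2}$, so that the difference of two node embeddings is $\theta_i - \theta_j = \sqrt{V_G}\,(e_i-e_j)^T V S^{-1/2}$.

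Next I would expand the squared norm of this difference. Writing $\|x\|^2 = x x^T$ for a row vector and pulling out the scalar $V_G$ gives
\begin{equation}
\|\theta_i-\theta_j\|^2 = V_G\,(e_i-e_j)^T V S^{-1/2}\,(S^{-1/2})^T V^T (e_i-e_j).
\end{equation}
Since $S$ is diagonal, $S^{-1/2}(S^{-1/2})^T = S^{-1}$, so the middle factor collapses to $V S^{-1} V^T$. Comparing with the spectral form of the pseudo-inverse recorded earlier, $L^{+} = \sum_{i=2}^{n}\lambda_i^{-1} v_i v_i^T = V S^{-1} V^T$, I would conclude that $\|\theta_i-\theta_j\|^2 = V_G\,(e_i-e_j)^T L^{+} (e_i-e_j)$, which is exactly $c_{ij}$ by Equation~\eqref{equa1}. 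Taking square roots then yields $\sqrt{c_{ij}} = \|\theta_i - \theta_j\|$, establishing the claim.

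The one delicate point, and the step I expect to require the most care, is the treatment of the zero eigenvalue $\lambda_1 = 0$, which makes $S^{-1/2}$ and $S^{-1}$ formally singular in their first entry. I would handle this by reading $S^{-1}$ in the Moore--Penrose sense, with its first diagonal entry set to $0$, so that the identity $V S^{-1} V^T = L^{+}$ holds with the sum genuinely starting at $i=2$. This convention is harmless for pairwise distances: the first eigenvector is constant, $v_1 = (1,\ldots,1)^T$, whence $(e_i-e_j)^T v_1 = 0$, so the first coordinate is identical for every node and contributes nothing to any pairwise distance. With that convention fixed, the chain of equalities above goes through verbatim and the theorem follows.
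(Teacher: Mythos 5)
Your proposal is correct and follows essentially the same route as the paper: both arguments rest on writing $L^{+}=VS^{-1}V^{T}$, splitting $S^{-1}=S^{-1/2}S^{-1/2}$, and recognizing the resulting quadratic form as a squared Euclidean distance between rows of $\sqrt{V_G}VS^{-1/2}$ (you merely run the chain of equalities in the opposite direction, from the embedding distance back to $c_{ij}$). Your explicit handling of the zero eigenvalue via the Moore--Penrose convention and the observation that $(e_i-e_j)^{T}v_1=0$ is a welcome point of rigor that the paper leaves implicit, but it does not change the substance of the argument.
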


\begin{proof}
Equation \ref{equa1} can be written as:
\begin{align*}
c_{ij}  &= V_{G}(e_{i}-e_{j})^{T}L^{+}(e_{i}-e_{j}) \\
        &= V_{G}(e_{i}-e_{j})^{T}V S^{-1} V^{T}(e_{i}-e_{j})  \\
        &= V_{G}(e_{i}-e_{j})^{T}VS^{-1/2}S^{-1/2}V^{T} (e_{i}-e_{j}) \\
        &= [\sqrt{V_G}S^{-1/2}V^{T}(e_{i}-e_{j})]^{T}[\sqrt{V_G}S^{-1/2}V^{T}(e_{i}-e_{j})].
\end{align*}

Thus the commute time is the square pairwise Euclidean distance between column vectors in space {$\sqrt{V_G}S^{-1/2}V^{T}$} or row vectors in space $\theta = \sqrt{V_G}VS^{-1/2}$.
\end{proof}

Since the commute time can capture the geometry structure in the data, using $k$-means in the embedding $\theta$ can effectively capture the complex clusters. This is very similar to the idea of spectral clustering. The commute time is a novel metric capturing the data geometry structure and is embedded in the Laplacian eigenspace. Alternatively, spectral clustering maps the original data to the eigenspace of the Laplacian where the clusters can be linearly separated. However, there are some differences between Commute time Embedding Spectral Clustering (denoted as CESC) and spectral clustering.
\begin{itemize}
\item Spectral clustering only uses $k$ eigenvectors of $V$ while CESC uses all the eigenvectors.
\item The eigenspace in CESC is scaled by the eigenvalues of the Laplacian.
\end{itemize}

In case of the normalized Laplacian $L_n=D^{-1/2}LD^{-1/2}$, the embedding for the commute time is $\theta_n = \sqrt{V_G}D^{-1/2}V_nS_n^{-1/2}$ \cite{qiu2007} where $V_n$ and $S_n$ are the matrix containing eigenvectors and eigenvalues of $L_n$. The normalized eigenspace is scaled more with the degree matrix. However, since the commute time is a metric independent to the Laplacian and $k$-means in the eigenspace uses the square Euclidean distance which is the commute time in the graph space, CESC is independent to the use of the normalized or unnormalized Laplacian.

\section{Approximate Commute Time Embedding Clustering}
\label{chapter:CDST}
The embedding $\theta = \sqrt{V_G}VS^{-1/2}$ is costly to create since it take $O(n^3)$ for the eigen decomposition of $L$. Even if we can make use the sparsity of $L$ in sparse graph by computing a few smallest eigenvectors of $L$ \cite{saerens2004b} using Lanczos method \cite{golub1996}, the method is still hard to converge and thus is inefficient for large graphs. We adopt the idea in \cite{spielman2008} to approximate the commute time embedding more efficiently. Speilman and Srivastava \cite{spielman2008} used random projection and the linear time solver of Speilman and Teng \cite{spielman2004,spielman2006} to build a structure where we can compute the compute time between two nodes in $O(\log{n})$ time.

\begin{fact}
Let $m$ be the number of edges in $G$. If the edges in $G$ are oriented, $B_{m \times n}$ given by:
\begin{displaymath}
B(u,v) = \begin{cases}
            1 & \mbox{if $v$ is $u$'s head} \\
           -1 & \mbox{if $v$ is $u$'s tail} \\
            0 & \mbox{otherwise}
                \end{cases}
\end{displaymath}
is a signed edge-vertex incidence matrix and $W_{m \times m}$ is a diagonal matrix whose entries are the edge weights. Then $L=B^{T}WB$.
\end{fact}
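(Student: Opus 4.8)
The plan is to verify the identity $L = B^{T}WB$ by a direct entrywise computation of the matrix product and matching it against the definition $L = D - A$. First I would write out the $(i,j)$ entry of $B^{T}WB$ as a sum over edges. Since $W$ is diagonal with $W(e,e) = w_{e}$ the weight of edge $e$, and the $e$-th row of $B$ records the signed incidence of edge $e$ with the nodes, we get
\[
(B^{T}WB)_{ij} = \sum_{e} w_{e}\, B(e,i)\, B(e,j),
\]
where $e$ ranges over the $m$ oriented edges. Everything then reduces to understanding the factor $B(e,i)B(e,j)$.

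Next I would split into the diagonal and off-diagonal cases. For $i = j$, the factor $B(e,i)^{2}$ equals $1$ exactly when $i$ is an endpoint (head or tail) of $e$ and $0$ otherwise, so the sum collapses to $\sum_{e \ni i} w_{e} = d_{i}$; this recovers the diagonal degree matrix $D$. For $i \neq j$, the product $B(e,i)B(e,j)$ is nonzero only for an edge $e$ incident to both $i$ and $j$, i.e. the edge joining them; for that edge one endpoint is the head and the other the tail, so $B(e,i)B(e,j) = (+1)(-1) = -1$, giving $(B^{T}WB)_{ij} = -w_{ij}$, which is precisely the off-diagonal entry of $-A$. Combining the two cases yields $B^{T}WB = D - A = L$.

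The step I expect to require the most care is not any heavy calculation but the observation that the result is independent of the arbitrary orientation assigned to each edge. Because each entry of $B^{T}WB$ depends on $B$ only through the products $B(e,i)B(e,j)$, flipping the orientation of an edge negates both $B(e,i)$ and $B(e,j)$ and leaves the product unchanged; in the off-diagonal case the head and tail roles simply swap, but the sign $(+1)(-1)$ is unaffected. I would also make explicit the implicit assumption that $G$ is a simple graph, so that at most one edge joins any pair $i \neq j$ and the off-diagonal sum contributes a single term equal to $-w_{ij}$. With these two points checked, the entrywise match against $D - A$ completes the proof.
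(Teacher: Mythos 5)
Your entrywise computation is correct and is the standard argument: the paper states this fact without proof (it is quoted as known background from the spectral sparsification literature), so there is nothing to compare against, but your split into the diagonal case $\sum_{e \ni i} w_e B(e,i)^2 = d_i$ and the off-diagonal case $B(e,i)B(e,j) = -1$ for the single edge joining $i$ and $j$ correctly recovers $D - A = L$. Your observations that the product $B(e,i)B(e,j)$ is invariant under reversing the arbitrary orientation, and that the off-diagonal sum has a single term because the similarity graph is simple, are exactly the right points to make explicit.
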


\begin{lemma}
\label{lem:CDST1}
(\cite{spielman2008}). $\theta = \sqrt{V_G}L^{+}B^{T}W^{1/2} \in \mathbb{R}^{m}$ is a commute time embedding where the square root of the commute time $\sqrt{c_{ij}}$ is an Euclidean distance between $i$ and $j$ in $\theta$.
\end{lemma}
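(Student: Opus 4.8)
The plan is to reduce everything to Equation (\ref{equa1}) by a direct distance computation, mirroring the previous theorem but now exploiting the incidence-matrix factorization $L = B^T W B$ from the preceding Fact. The embedding vector of node $i$ is the $i$-th row of $\theta$, namely $\sqrt{V_G}\, e_i^T L^+ B^T W^{1/2}$, so I would begin by writing the squared Euclidean distance between the rows for nodes $i$ and $j$ as a quadratic form and then simplify the middle factors.

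First I would expand the squared distance as
\begin{align*}
\|\theta_i - \theta_j\|^2 = V_G (e_i - e_j)^T L^+ B^T W^{1/2} (W^{1/2})^T B (L^+)^T (e_i - e_j).
\end{align*}
Next I would collapse the interior using two symmetry observations: since $W$ is diagonal, $W^{1/2}$ is symmetric and $W^{1/2} W^{1/2} = W$; and since $L$ is symmetric, so is $L^+$. This turns the expression into $V_G (e_i - e_j)^T L^+ (B^T W B) L^+ (e_i - e_j)$, at which point I would substitute $B^T W B = L$ to obtain $V_G (e_i - e_j)^T L^+ L L^+ (e_i - e_j)$.

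The crux of the argument is the Moore--Penrose identity $L^+ L L^+ = L^+$. I would justify it from the eigendecomposition $L = V S V^T$ supplied in the earlier Fact: writing $L^+ = \sum_{i=2}^{n} \lambda_i^{-1} v_i v_i^T$ and using orthonormality of the $v_i$ together with the omission of the zero eigenvalue, the product $L^+ L L^+$ telescopes back to $L^+$. Applying this identity leaves $V_G (e_i - e_j)^T L^+ (e_i - e_j)$, which is exactly $c_{ij}$ by Equation (\ref{equa1}). Hence $\sqrt{c_{ij}} = \|\theta_i - \theta_j\|$, so $\theta$ is a commute time embedding.

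The main obstacle is conceptual rather than computational: one must be comfortable treating $L^+$ through its spectral form so that the pseudo-inverse identity is transparent, and one must track the symmetry of $W^{1/2}$ and $L^+$ carefully when transposing the product. Everything else is routine linear algebra, and the new dimension $m$ (edges) in place of the previous $n$ (nodes) plays no role in the distance-preservation argument.
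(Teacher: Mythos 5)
Your proposal is correct and follows essentially the same route as the paper's proof: both hinge on the factorization $L = B^T W B$ together with the pseudo-inverse identity $L^{+}LL^{+} = L^{+}$ applied inside the quadratic form of Equation (\ref{equa1}), the only difference being that you start from the embedding distance and reduce it to $c_{ij}$ while the paper runs the same chain in the opposite direction. Your explicit justification of $L^{+}LL^{+}=L^{+}$ via the spectral form is a welcome addition that the paper leaves implicit.
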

\begin{proof}
From Equation \ref{equa1}:
\begin{align*}
c_{ij}  &= V_G(e_{i}-e_{j})^{T}L^{+}(e_{i}-e_{j}) \\
        &= V_G(e_{i}-e_{j})^{T}L^{+}LL^{+}(e_{i}-e_{j}) \\
        &= V_G(e_{i}-e_{j})^{T}L^{+}B^{T}WBL^{+}(e_{i}-e_{j}) \\
        &= V_G[(e_{i}-e_{j})^{T}L^{+}B^{T}W^{1/2}][W^{1/2}BL^{+}(e_{i}-e_{j})] \\
        &= [\sqrt{V_G}W^{1/2}BL^{+}(e_{i}-e_{j})]^{T}[\sqrt{V_G}W^{1/2}BL^{+}(e_{i}-e_{j})]
\end{align*}
Thus the commute time is the square pairwise Euclidean distance between column vectors in the space {$\sqrt{V_G}W^{1/2}BL^{+}$} or between row vectors in space $\theta = \sqrt{V_G}L^{+}B^{T}W^{1/2} \in \mathbb{R}^{m}$.
\end{proof}

These distances are preserved under the Johnson-Lindenstrauss Lemma if we project a row vector in $\theta$ onto a subspace spanned by $k_{RP} = O(\log{n})$ random vectors \cite{johnson1984}. We can use a random matrix $Q_{k_{RP} \times m}$ where $Q(i,j) = \pm1/\sqrt{k_{RP}}$ with equal probabilities regarding the following lemma.

\begin{lemma}
\label{lem:CDST2}
(\cite{achlioptas2001}). Given fix vectors $v_1,...,v_n \in \mathbb{R}^{d}$ and $\epsilon>0$, let $Q_{k_{RP} \times d}$ be a random matrix so that $Q(i,j) = \pm1/\sqrt{k_{RP}}$ with $k_{RP}=O(\log{n}/\epsilon^2)$. With probability at least $1-1/n$:
\begin{displaymath}
(1-\epsilon)\|v_i-v_j\|^2 \leq \|Qv_i-Qv_j\|^2 \leq (1+\epsilon)\|v_i-v_j\|^2
\end{displaymath}
for all pairs $i,j \in G$.
\end{lemma}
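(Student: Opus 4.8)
The plan is to prove the standard Johnson--Lindenstrauss concentration for the sign matrix $Q$ by (i) reducing to a single fixed vector, (ii) establishing an exponential concentration inequality for that vector, and (iii) taking a union bound over all pairs. First I would fix an arbitrary $x \in \mathbb{R}^d$ (eventually $x = v_i - v_j$) and, by homogeneity, normalize $\|x\| = 1$. Writing $Q(l,j) = r_{lj}/\sqrt{k_{RP}}$ with $r_{lj} \in \{+1,-1\}$ independent and uniform, each coordinate of $Qx$ is $Z_l/\sqrt{k_{RP}}$ where $Z_l = \sum_{j} r_{lj} x_j$, so that $\|Qx\|^2 = \frac{1}{k_{RP}} \sum_{l=1}^{k_{RP}} Z_l^2$. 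A direct computation using independence and $\mathbb{E}[r_{lj}] = 0$ gives $\mathbb{E}[Z_l^2] = \sum_j x_j^2 = 1$, hence $\mathbb{E}[\|Qx\|^2] = 1 = \|x\|^2$; the remaining task is to show this quantity concentrates around its mean.

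The heart of the argument is a bound on the moment generating function $\mathbb{E}[e^{sZ_l^2}]$ of a single Rademacher quadratic form. I would obtain it by a Gaussian-linearization trick: for $0 < s < 1/2$ and $g \sim N(0,1)$ independent of the $r_{lj}$, using $\mathbb{E}_g[e^{ag}] = e^{a^2/2}$ one has $e^{sZ_l^2} = \mathbb{E}_g[e^{\sqrt{2s}\,gZ_l}]$, so
\[
\mathbb{E}_r\!\left[e^{sZ_l^2}\right] = \mathbb{E}_r \mathbb{E}_g\!\left[e^{\sqrt{2s}\, g Z_l}\right] = \mathbb{E}_g \prod_{j} \mathbb{E}_{r_{lj}}\!\left[e^{\sqrt{2s}\, g r_{lj} x_j}\right] = \mathbb{E}_g \prod_j \cosh\!\left(\sqrt{2s}\, g x_j\right).
\]
Applying $\cosh(a) \le e^{a^2/2}$ termwise together with $\sum_j x_j^2 = 1$ then yields $\mathbb{E}_r[e^{sZ_l^2}] \le \mathbb{E}_g[e^{sg^2}] = (1-2s)^{-1/2}$, exactly the moment generating function of a $\chi^2_1$ variable. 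Thus the sign projection concentrates at least as sharply as a Gaussian projection. This comparison is the step I expect to require the most care: I must restrict $s$ to the admissible range $(0,1/2)$ and justify the interchange of the $r$- and $g$-expectations (Fubini) before factoring over the independent coordinates.

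Given the moment generating function bound, the remainder is routine. By independence across the $k_{RP}$ rows, $\mathbb{E}[\exp(s \sum_l Z_l^2)] \le (1-2s)^{-k_{RP}/2}$, so a Chernoff bound optimized over $s$ gives for the upper tail $\Pr[\|Qx\|^2 > (1+\epsilon)] \le \exp(-\tfrac{k_{RP}}{2}(\epsilon - \ln(1+\epsilon)))$, and a symmetric estimate for the lower tail; since $\epsilon - \ln(1+\epsilon) \ge c\epsilon^2$ for an absolute constant $c$ and small $\epsilon$, each tail is at most $\exp(-c\,k_{RP}\epsilon^2)$. Finally I would apply a union bound over the fewer than $n^2$ pairs $(i,j)$, taking $x = v_i - v_j$ in each instance: the total failure probability is at most $2n^2 \exp(-c\,k_{RP}\epsilon^2)$, which is at most $1/n$ once $k_{RP} = O(\log n / \epsilon^2)$ with a sufficiently large hidden constant. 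This delivers the two-sided bound for all pairs $i,j \in G$ simultaneously with probability at least $1 - 1/n$.
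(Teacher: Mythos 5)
The paper does not actually prove this lemma: it is imported verbatim from Achlioptas (2001) as a black box, so there is no in-paper argument to compare against. Your proposal is the standard Johnson--Lindenstrauss argument for $\pm 1$ projections, and most of it is sound: the decomposition $\|Qx\|^2 = \tfrac{1}{k_{RP}}\sum_l Z_l^2$, the computation $\mathbb{E}[Z_l^2]=\|x\|^2$, the Gaussian-linearization bound $\mathbb{E}_r[e^{sZ_l^2}] \le \mathbb{E}_g[e^{sg^2}] = (1-2s)^{-1/2}$ via $\cosh(a)\le e^{a^2/2}$, the resulting $\chi^2$-type Chernoff bound for the upper tail, and the union bound over $\binom{n}{2}$ pairs are all correct. (The Fubini step you worry about is harmless: the integrand is nonnegative, so Tonelli applies.)

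The one genuine gap is the phrase ``a symmetric estimate for the lower tail.'' It is not symmetric. Your moment-generating-function comparison controls $\mathbb{E}[e^{sZ_l^2}]$ for $s\in(0,1/2)$, which bounds $\Pr[\|Qx\|^2 > 1+\epsilon]$; the lower tail requires an upper bound on $\mathbb{E}[e^{-sZ_l^2}]$, and the Gaussian-linearization trick breaks there --- writing $e^{-sZ^2}$ as a real Gaussian mixture of exponentials is impossible (one would need $\mathbb{E}_g[e^{i\sqrt{2s}gZ}]$, leading to $\mathbb{E}_g\prod_j\cos(\sqrt{2s}gx_j)$, and $\cos(a)\le e^{-a^2/2}$ is false, e.g.\ at $a=2\pi$). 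Nor does an upper bound on the positive-$s$ MGF imply anything about the negative-$s$ MGF. The fix is easy but is a separate argument: for instance, $\mathbb{E}[e^{-sZ_l^2}] \le 1 - s\mathbb{E}[Z_l^2] + \tfrac{s^2}{2}\mathbb{E}[Z_l^4] \le 1 - s + \tfrac{3s^2}{2} \le e^{-s+3s^2/2}$, using $\mathbb{E}[Z_l^4] = 3(\sum_j x_j^4$-corrected$) \le 3\|x\|^4 = 3$ for Rademacher sums; optimizing $s=\epsilon/3$ then gives $\Pr[\|Qx\|^2 < 1-\epsilon] \le e^{-k_{RP}\epsilon^2/6}$, which is what your union bound needs. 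With that one paragraph added, your proof is complete and is essentially Achlioptas's own route (he instead compares moments of $Z_l^2$ to those of $g^2$ term by term, which handles both tails at once).
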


\begin{theorem}
(\cite{spielman2008}). Given $\epsilon>0$ and a matrix $Z_{O(\log{n}/\epsilon^2) \times n}=\sqrt{V_G}QW^{1/2}BL^{+}$, with probability at least $1-1/n$:
\begin{displaymath}
(1-\epsilon)c_{ij} \leq \|Z(e_i-e_j)\|^2 \leq (1+\epsilon)c_{ij}
\end{displaymath}
for all pairs $i,j \in G$.
\end{theorem}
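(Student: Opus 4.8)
The plan is to obtain the statement purely by composing the exact embedding of Lemma \ref{lem:CDST1} with the dimension-reduction guarantee of Lemma \ref{lem:CDST2}, so that no new probabilistic argument is required. First I would fix the family of $n$ vectors in $\mathbb{R}^m$ to which the random projection is to be applied: for each node $i$ set $v_i = \sqrt{V_G}\,W^{1/2}BL^{+}e_i \in \mathbb{R}^m$, i.e.\ the column form of the $i$-th embedding coordinate vector supplied by Lemma \ref{lem:CDST1}. By that lemma the embedding realizes the commute time exactly, so $\|v_i - v_j\|^2 = c_{ij}$ for every pair $i,j \in G$.

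Next I would verify that the matrix $Z = \sqrt{V_G}\,Q W^{1/2} B L^{+}$ acts on the indicator $e_i$ exactly as $Q$ acts on $v_i$. Indeed $Z e_i = \sqrt{V_G}\,Q W^{1/2} B L^{+} e_i = Q v_i$, so by linearity $Z(e_i - e_j) = Q(v_i - v_j)$ and hence $\|Z(e_i - e_j)\|^2 = \|Q(v_i - v_j)\|^2$. At this point I would invoke Lemma \ref{lem:CDST2} with ambient dimension $d = m$ and projection dimension $k_{RP} = O(\log n/\epsilon^2)$, applied to the fixed family $v_1,\dots,v_n$; it guarantees, with probability at least $1 - 1/n$ and simultaneously for all pairs, that $(1-\epsilon)\|v_i - v_j\|^2 \le \|Q(v_i - v_j)\|^2 \le (1+\epsilon)\|v_i - v_j\|^2$. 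Substituting $\|v_i - v_j\|^2 = c_{ij}$ together with the identity above yields exactly the claimed two-sided bound on $\|Z(e_i - e_j)\|^2$.

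There is no genuinely hard analytic step here; the content is carried entirely by the two lemmas. The points that demand care are bookkeeping rather than mathematics. One must track the transposes correctly to confirm that $v_i = \sqrt{V_G}\,W^{1/2}BL^{+}e_i$ really is the column form of the $i$-th row of the embedding $\theta = \sqrt{V_G}\,L^{+}B^{T}W^{1/2}$, which uses the symmetry $L^{+} = (L^{+})^{T}$ and $(B^{T})^{T} = B$. One must also note that Lemma \ref{lem:CDST2} already provides the union bound over all $\binom{n}{2}$ pairs, so the $1 - 1/n$ guarantee transfers verbatim without any further failure-probability accounting. The conceptually important observation, which I would emphasize but which needs no proof beyond citing the lemma, is that the target dimension $k_{RP}$ depends only on $n$ and $\epsilon$ and not on $m$; this is precisely what makes $Z$ a useful low-dimensional structure even though the exact embedding of Lemma \ref{lem:CDST1} lives in the potentially very high-dimensional space $\mathbb{R}^m$.
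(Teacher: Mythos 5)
Your proposal is correct and follows exactly the route the paper takes: the paper's proof is the one-line remark that the theorem ``comes directly from Lemmas \ref{lem:CDST1} and \ref{lem:CDST2},'' and your argument is precisely the careful unpacking of that composition (identifying $v_i=\sqrt{V_G}W^{1/2}BL^{+}e_i$ so that $\|v_i-v_j\|^2=c_{ij}$, observing $Z(e_i-e_j)=Q(v_i-v_j)$, and invoking the Johnson--Lindenstrauss guarantee). No discrepancies to report.
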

\begin{proof}
The proof comes directly from Lemmas \ref{lem:CDST1} and \ref{lem:CDST2}.
\end{proof}

Therefore we are able to construct a matrix $Z=\sqrt{V_G}QW^{1/2}BL^{+}$ which $c_{ij} \approx \|Z(e_i-e_j)\|^2$ with an error $\epsilon$. Since to compute $L^{+}$ directly is expensive, the linear time solver of Spielman and Teng \cite{spielman2004,spielman2006} is used instead.  First, $Y=\sqrt{V_G}QW^{1/2}B$ is computed. Then each of $k_{RP}=O(\log{n})$ rows of $Z$ (denoted as $z_i$) is computed by solving the system $z_iL=y_i$ where $y_i$ is a row of $Y$. The linear time solver of Spielman and Teng takes only $\widetilde{O}(m)$ time to solve the system \cite{spielman2008}.

Since $\|z_i-\widetilde{z}_i\|_L \leq \epsilon \|z_i\|_L$ where $\widetilde{z}_i$ is the solution of $z_iL=y_i$ using the linear time solver \cite{spielman2008} we have:
\begin{equation}
\label{equa2}
(1-\epsilon)^2c_{ij} \leq \|\widetilde{Z}(e_i-e_j)\|^2 \leq (1+\epsilon)^2c_{ij}
\end{equation}
where $\widetilde{Z}$ is the matrix containing row vector $\widetilde{z}_i$.

Equation \ref{equa2} shows that the approximate spectral clustering using approximate commute time embedding by combining random projection and a linear time solver has the error $\epsilon^2$. The method is detailed in Algorithm \ref{algo:CESC}.

\begin{algorithm}[h!]
\caption{Commute time Embedding Spectral Clustering (CESC)}
\label{algo:CESC}
\textbf{Input:} Data matrix $X \in \mathbb{R}^{d}$, number of clusters $k$, number of random vectors $k_{RP}$\\
\textbf{Output:} Cluster membership for each data point \\
\begin{algorithmic}[1]
\STATE Construct a $k_1$-nearest neighbor graph $G$ from $X$ with Gaussian kernel similarity ($k_1 \ll n$). \\
\STATE Compute matrices $B$, $W$, and $L$ from $G$. \\
\STATE Compute $Y=\sqrt{V_G}QW^{1/2}B$ where $Q$ is an $\pm1/\sqrt{k_{RP}}$ random matrix. \\
\STATE Compute all rows $\widetilde{z}_i$ of $\widetilde{Z}_{k_{RP} \times n}=YL^+$ by $k_{RP}$ calls to the Spielman-Teng solver.\\
\STATE Cluster the points in $\widetilde{Z}^T$ using $k$-means clustering.
\end{algorithmic}
\end{algorithm}

In Algorithm \ref{algo:CESC}, $\theta = \widetilde{Z}^{T} \in \mathbb{R}^{k_{RP}=O(\log{n})}$ is the embedding space where the square pair wise Euclidean distance is the approximate commute time. Applying $k$-means in $\theta$ is a novel way to accelerate spectral clustering without using any sampling technique and computing any eigenvector. Moreover, the approximate embedding is guaranteed with the error bound $\epsilon^2$ and the method can be applied directly in graph data.

\subsection{Analysis}
\label{section:complexity}
Here we analyze the computational complexity of the proposed method. Firstly the $k_1$-nearest neighbor graph is constructed in $O(n\log{n})$ time using $kd$-tree. $Y=\sqrt{V_G}QW^{1/2}B$ is computed in $O(2mk_{RP} + m) = O(mk_{RP})$ time since there are only $2m$ nonzeros in $B$ and $W$ is a diagonal matrix with $m$ nonzeros. Then each of $k_{RP}$ rows of $\widetilde{Z}$ (denoted as $\widetilde{z}_i$) is computed by solving the system $z_iL=y_i$ in $\widetilde{O}(m)$ time where $y_i$ is a row of $Y$. Since we use $k_1$-nearest neighbor graph where $k_1 \ll n$, $O(m)=O(n)$. Therefore, the construction of $\widetilde{Z}$ takes $\widetilde{O}(nk_{RP})$ time. $k$-means algorithm in $\widetilde{}Z^T$ takes $O(tkk_{RP}n)$ where $k$ is the number of clusters and $t$ is the number of iterations for the algorithm to be converged.

The summary of the analysis of CESC and other approximate spectral clustering techniques is in Table \ref{tab:table1}. All methods create the embedded space where they use $k$-means to cluster the data. The dimension of the embedding of Nystr\"{o}m, KASP, and LSC is $k$ - the number of clusters. For CESC, it is $k_{RP}$.

\begin{table*}
  \centering
  \caption{Complexity comparisons of all approximate spectral clustering methods. $n, d, s, k_{RP}, k$ is the number of instances, features, representatives, random projection columns, and the number of clusters, respectively.}
  \begin{tabular}{| l | l | l | l |l|}
    \hline
    Method      & Sampling  & Affinity matrix   & Embedded space            & $k$-means      \\
    \hline
    Nystr\"{o}m & $O(1)$    & $O(dsn)$          & $O(s^3 + sn)$             & $O(tk^2n)$     \\
    KASP        & $O(tdsn)$ & $O(ds^2)$         & $O(s^3)$                  & $O(tk^2s)$     \\
    LSC         & $O(1)$    & $O(dsn)$          & $O(s^3 + s^2n)$           & $O(tk^2n)$     \\
    CESC        & N/A       & $O(dn\log{n})$    & $\widetilde{O}(k_{RP}n)$  & $O(tkk_{RP}n)$ \\
    \hline
  \end{tabular}
  \label{tab:table1}
\end{table*}

Note that in practise, $k_{RP}=O(\log{n}/\epsilon^2)$ is small and does not have much differences between different datasets. We will discuss it in the experimental section. We can choose $k_{RP} \ll n$. Moreover, the performance of the linear time solver is observed to be linear empirically instead of $\widetilde{O}(m)$ \cite{koutis2009}. Therefore, the construction of $\widetilde{Z}$ takes only $O(nk_{RP})$ in practise.

On the contrary, the number of representatives $s$ cannot be very small in order to correctly represent the whole dataset. Therefore, the term $O(s^3)$ cannot be ignored. It is shown in the experiment that CESC is faster than all other methods while still maintaining better quality in clustering results.

\section{Experimental Results}
\label{chapter:expres}

\subsection{Evaluation criteria}
We report on the experiments carried out to determine and compare the effectiveness of the Nystr\"{o}m, KASP, LSC, and CESC methods. It included the clustering accuracy (percentage) and the computational time (second). For accuracy, it was measured against spectral clustering as the benchmark method since all of them are its approximations. The accuracy was computed by counting the fraction of matching between cluster memberships of spectral clustering and the approximate method, given by:
\begin{displaymath}
Accuracy = \frac{\sum_{i=1}^{n}\delta[map(c_i)=label(i)]}{n},
\end{displaymath}
where $n$ is the number of data instances, $label(i)$ and $c_i$ are the actual cluster label and the predicted label of a data instance $i$, respectively. $\delta(\cdot)$ is an indicator function and $map(c_i)$ is a permutation function that maps cluster $c_i$ to a category label. The best matching can be found using Hungarian algorithm \cite{chen2010}.

\subsection{Methods and Parameters}
All the experimental results reported in the following sections were the average over 10 trials. We chose Gaussian kernel function as the similarity function for all the methods. The bandwidth $\sigma$ was chosen based on the width of the neighborhood information for each dataset. For Nystr\"{o}m, KASP, and LSC, the eigenspace was created from the normalized Laplacian $L = D^{-1/2}LD^{-1/2}$ since the normalized one is reported to be better \cite{luxburg2004}. Methods using the nearest neighbor graph chose $k_1=10$ as the number of nearest neighbor in building the similarity graph.

The followings are the detailed information regarding the experiments for each method:

\textbf{$k$-means}: all the approximate methods used $k$-means to cluster the data in the embedding. The Matlab build-in function `kmeans' was used. The number of replications was 5 and the maximum number of iterations was 100. The `cluster' option (i.e. cluster 10\% of the dataset to choose initial centroids) was used.

\textbf{Spectral clustering}: we implemented in Matlab the algorithm in \cite{ng2001}. Since it is not possible to do the eigen decomposition of the Laplacian matrix in fully connected graph for large datasets, a $k_1$-nearest neighbor graph was built and the sparse function `eigs' was used to find the eigenspace.

\textbf{Nystr\"{o}m}: we used the Matlab implementation of Chen et. al \cite{chen2010} which is available online at \emph{http://alumni.cs.ucsb.edu/$\sim$wychen/sc.html}.

\textbf{KASP:} we implemented in Matlab the algorithm in \cite{yan2009} and used $k$-means to select the representative centers.

\textbf{LSC:} we used the number of $k_1=10$ for building the sparse matrix $Z$. In \cite{chen2011}, the representatives can be chosen by randomly sampling or by $k$-means. Since the random selection was preferred by the authors and had a better balance between running time and accuracy, we only used this option in the experiments.

\textbf{CESC}: the algorithm was implemented in Matlab. The number of random vectors $k_{RP}=50$ was chosen throughout the experiments. We used the Koutis's CMG solver \cite{koutis2009} as the nearly linear time solver for creating the embedding. It is used for symmetric diagonally dominant matrices which is available online at \emph{http://www.cs.cmu.edu/$\sim$jkoutis/cmg.html}.

\subsection{An example}
A synthetic dataset featured the data clusters in the shapes of a phrase `Data Mining' as in Figure \ref{fig:example}. It has 2,000 data points in 10 clusters. We applied CESC, Nystr\"{o}m, KASP, and LSC on this dataset. The number of representatives was 500 which was 25\% of the data. The results are shown in Figure \ref{fig:example}. In the figures of Nystr\"{o}m, KASP, and LSC, the red dots are the representatives selected in their corresponding methods.

%\begin{figure*}
%  \centering
%  \subfloat[Spectral Clustering]{\label{fig:example_SC}\includegraphics[width=0.5\textwidth]{Swissroll_SC.eps}}
%  \subfloat[CESC]{\label{fig:example_CDST}\includegraphics[width=0.5\textwidth]{Swissroll_CDST.eps}}
%  \caption{Spectral clustering and CESC for Swiss roll dataset with $k=3$ clusters.}
%  \label{fig:example}
%\end{figure*}

\begin{figure*}
  \centering
  %\subfloat[Nystr\"{o}m]{\label{fig:example_Nystrom1}\includegraphics[width=0.3\textwidth]{SDM2012_Nystrom.eps}}
  %\subfloat[KASP]{\label{fig:example_KASP1}\includegraphics[width=0.3\textwidth]{SDM2012_KASP.eps}}
  %\subfloat[LSC]{\label{fig:example_LSC1}\includegraphics[width=0.3\textwidth]{SDM2012_LSCR.eps}} \\
  \subfloat[Nystr\"{o}m]{\label{fig:example_Nystrom2}\includegraphics[width=0.3\textwidth]{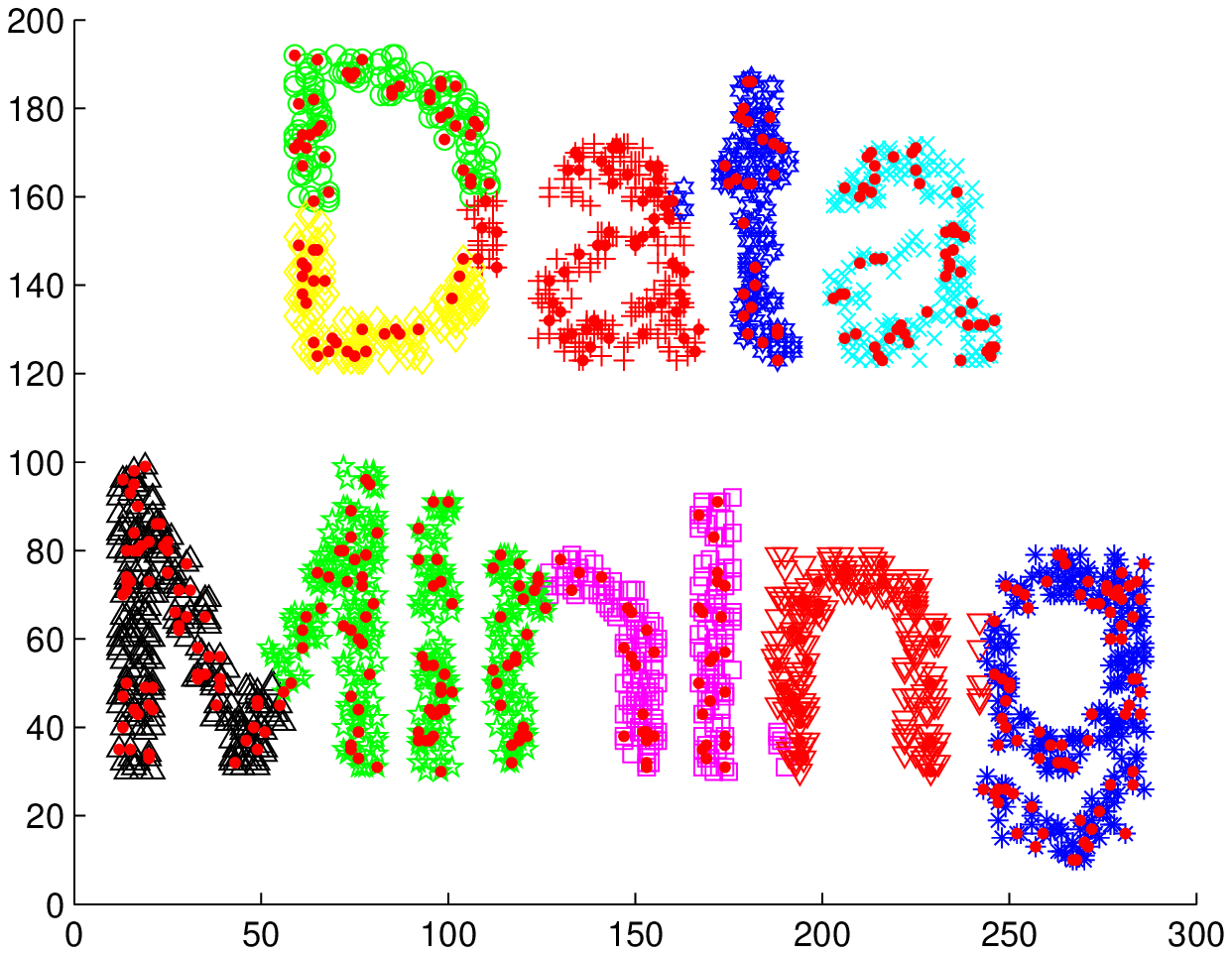}}
  \subfloat[KASP]{\label{fig:example_KASP2}\includegraphics[width=0.3\textwidth]{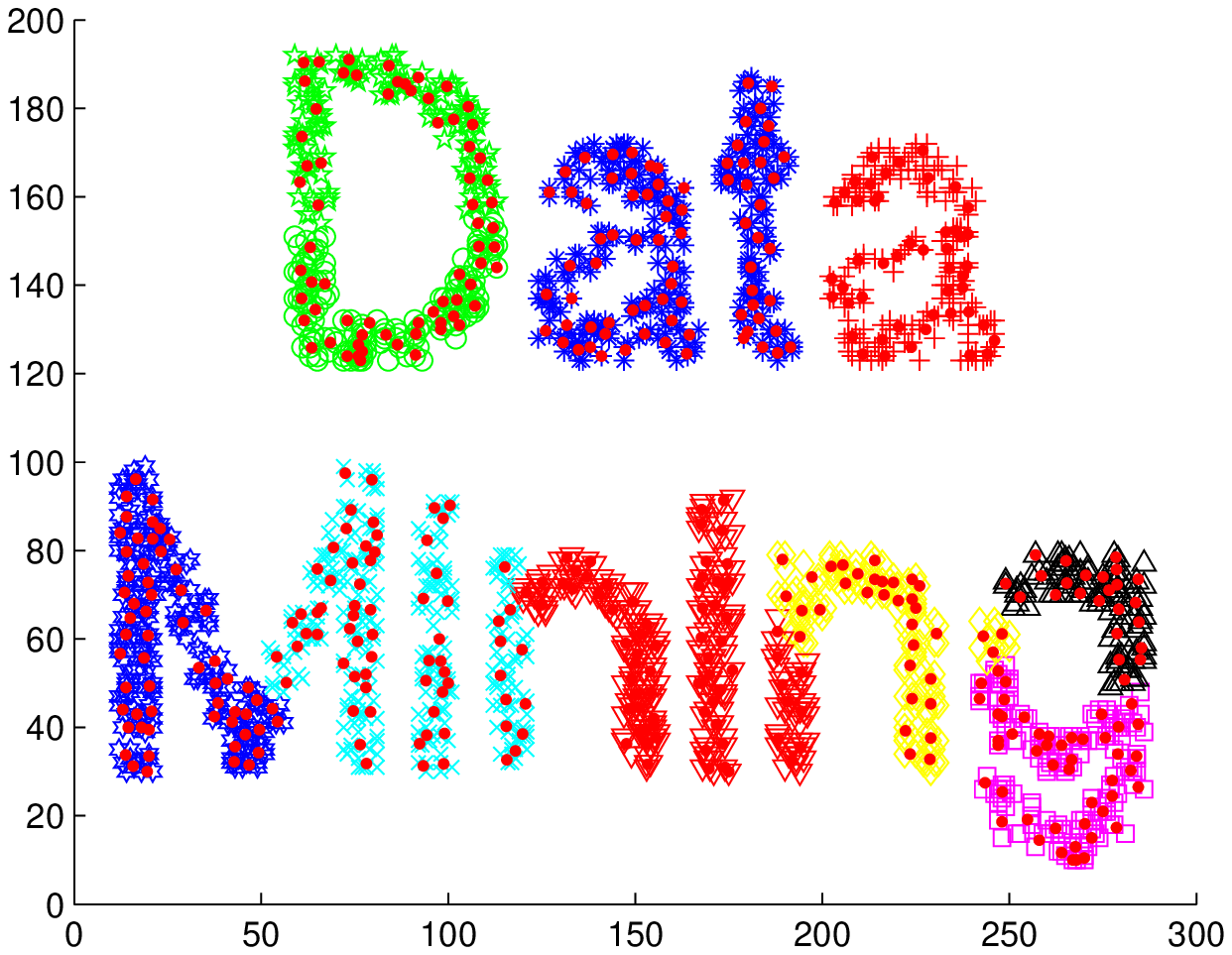}} \\
  \subfloat[LSC]{\label{fig:example_LSC2}\includegraphics[width=0.3\textwidth]{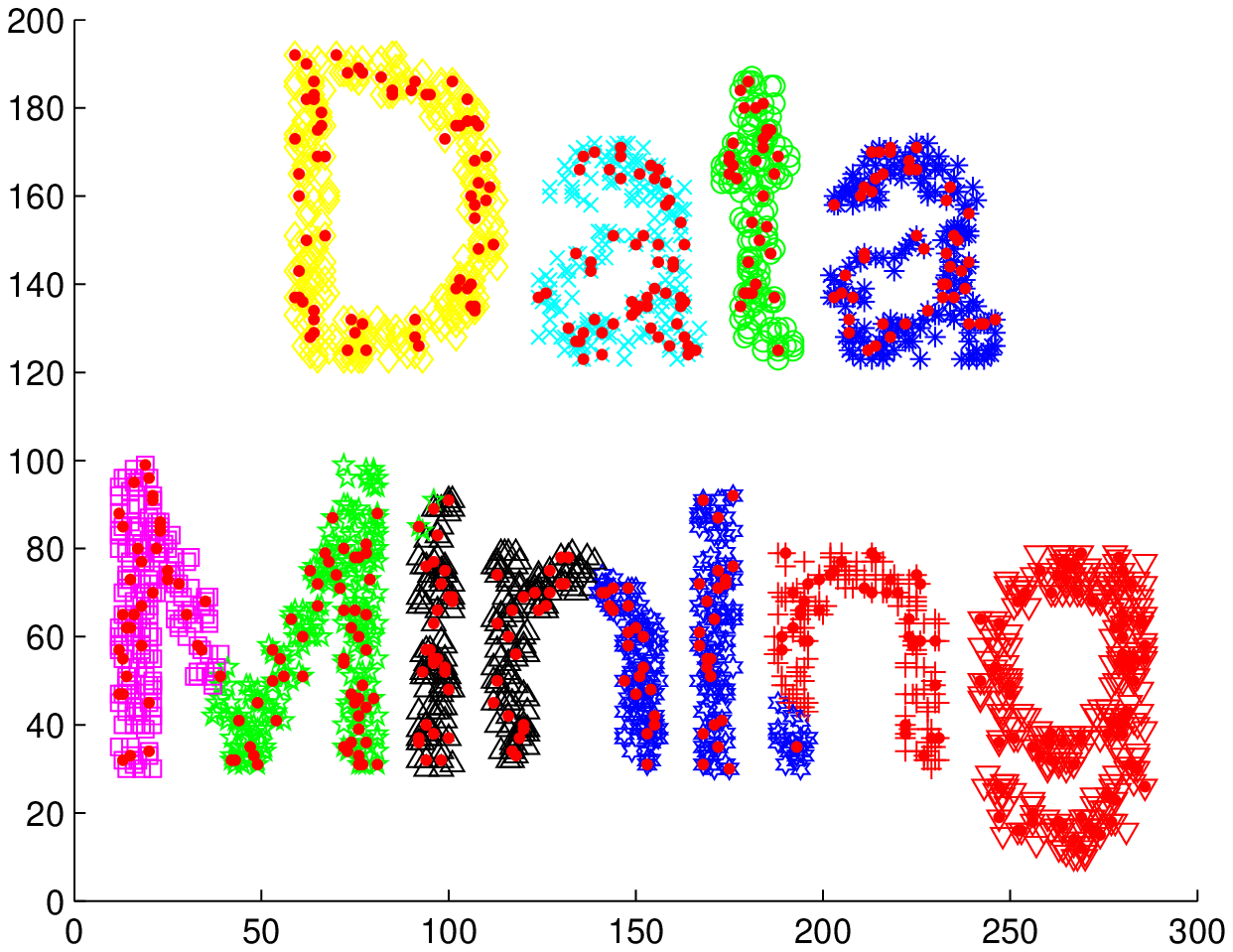}} 
  %\subfloat[CESC]{\label{fig:example_CDST1}\includegraphics[width=0.3\textwidth]{SDM2012_CDST.eps}}
  \subfloat[CESC]{\label{fig:example_CDST2}\includegraphics[width=0.3\textwidth]{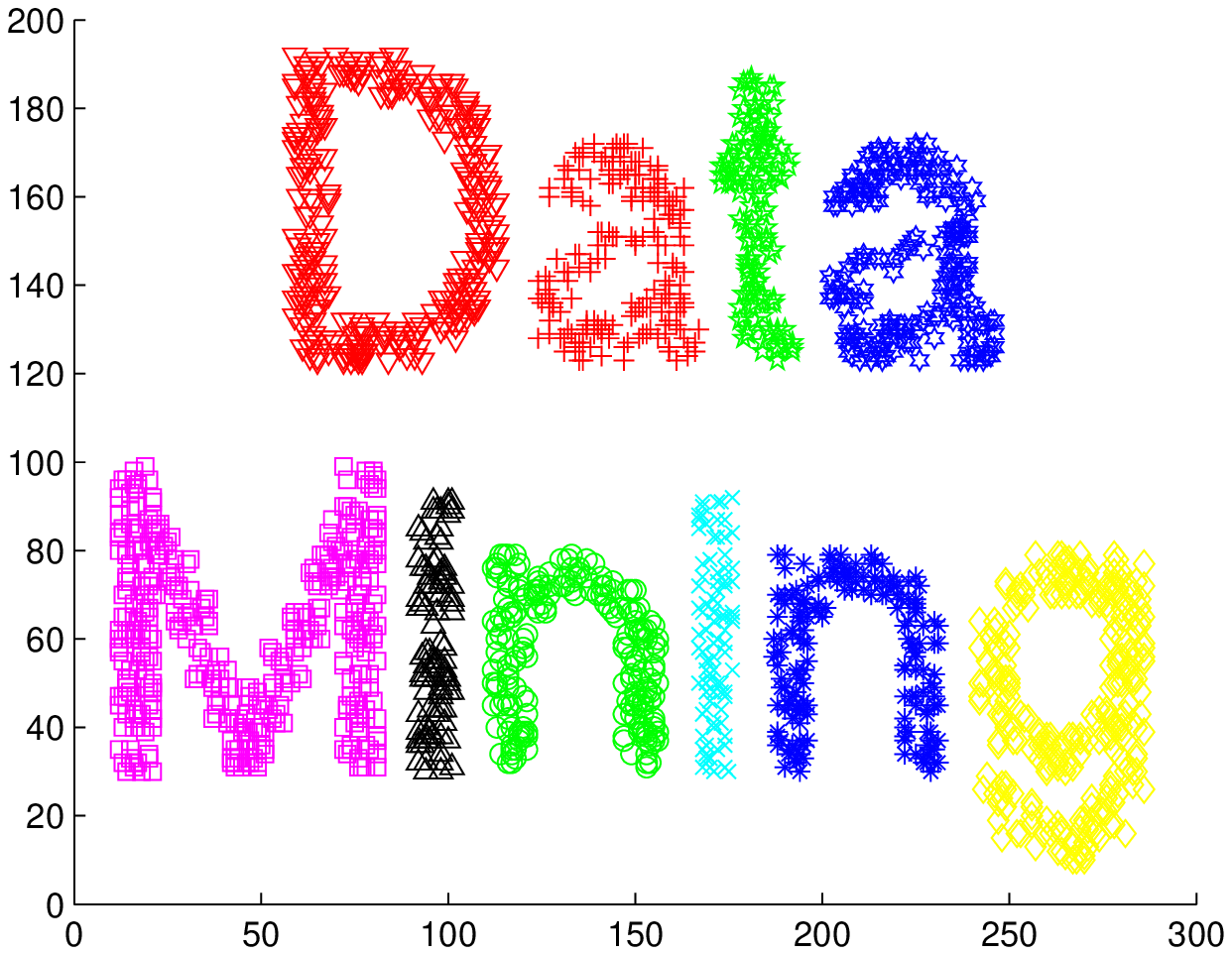}}
  \caption{Approximate spectral clustering methods using Nystr\"{o}m, KASP, LSC, and CESC. This shows the weakness of approximate methods based on sampling. The red dots are the representatives in Nystr\"{o}m, KASP, and LSC. CESC and exact spectral clustering can cluster the two datasets correctly.}
  \label{fig:example}
\end{figure*}

It can be seen from the results the weakness of sampling-based approximate methods and the strength of CESC. Although the number of representatives was large enough (25\% of data), it did not completely capture the geometry structures of all clusters and thus there were splits in some characters which a part of the character was considered closer to other character due to the structure of the representatives. CESC on the other hand clustered all data points correctly since it used all the data information. The exact spectral clustering also clustered the dataset correctly.

\subsection{Real Datasets}
We tested all the four methods in several real datasets with various sizes obtained from the UCI machine learning repository \cite{FrankAsuncion2010}. The details of all datasets are in Table \ref{tab:table2}. For all datasets, we normalized them so that all features had mean 0 and standard deviation 1.
%Since Poker Hand dataset is very imbalanced, we merged small clusters together so that there are 3 final clusters.

\begin{table*}
  \centering
  \caption{UCI Datasets.}
  \begin{tabular}{| l | r | r | r |l|}
    \hline
    Dataset       & Instances & Features      & Classes & Description           \\
    \hline
    Segmentation  & 2,100     & 19            & 7       & Image segmentation	\\
    Spambase      & 4,601     & 57            & 2       & Spam prediction       \\
    Musk          & 6,598     & 166           & 2       & Molecule prediction   \\
    Pen Digits	  & 10,992    & 16            & 10      & Pen-based recognition of handwritten digits \\
   %USPS          & 11,000    & 256           & 10                              \\
   %Magic Gamma   & 19,020    & 10            & 2                               \\
    Letter Rec    & 20,000    & 16            & 26      & Letter recognition    \\
    Connect4      & 67,557    & 42            & 3       & The game of connect-4 \\
    %Covertype     & 581,012   & 54            & 7       & Forest cover type prediction \\
    %Poker Hand	  & 1,000,000 & 10            & 3         \\
    \hline
  \end{tabular}
  \label{tab:table2}
\end{table*}

Regarding the number of representatives, it cannot be too small to truely represent the data or too big to significantly slower the methods. For the first 3 small datasets (Segmentation, Spambase, and Musk), we used 20\% of the data as the representatives. Medium sizes Pen Digits and Letter Rec used 10\% of the data as the representatives. For the big size dataset Connect4, we only chose 5,000 as the representatives. It is less than 10\% of data in Connect4. Since the computational time for large datasets will be very expensive for the sampling based methods if we use a high number of representatives, the percentage of the representatives will be less in larger datasets.
% and 1\% of Covertype

Tables \ref{tab:table3} and \ref{tab:table4} show the clustering results in accuracy (percentage) and running time (second) for all the datasets. Considering the accuracy, CESC outperformed all the sampling based approximate methods in most of datasets although the number of representatives they used was high enough. Considering the computational time, CESC was also the fastest method in all datasets.

\begin{table*}
  \centering
  \caption{Clustering accuracy (percentage). CESC outperformed other methods in most of the datasets.}
  \begin{tabular}{| l | r | r | r |r|}
    \hline
    Dataset       & KASP    & Nystr\"{o}m   & LSC               & CESC                \\
    \hline
    Segmentation  & 74.5\%  & 58.3\%        & 73.2\%            & \textbf{78.9\%}     \\
    Spambase      & 60.8\%  & 82.7\%        & 97.6\%            & \textbf{100\%}	  \\
    Musk          & 81.3\%  & 50.6\%        & 63.2\%            & \textbf{97.2\%}	  \\
    %\hline
    Pen Digits    & \textbf{83.4\%} & 74.8\%& 80.1\%            & 77.5\%              \\
    Letter Rec    & 52.8\%  & 39.2\%        & \textbf{58.5\%}   & 40.1\%              \\
    Connect4      & 86.8\%  & 35.3\%        & 83.0\%            & \textbf{97.4\%}     \\
    \hline
  \end{tabular}
  \label{tab:table3}
\end{table*}

\begin{table*}
  \centering
  \caption{Computational time (second). CESC was the fastest among all the approximate methods.}
  \begin{tabular}{| l | r | r | r |r|}
    \hline
    Dataset       & KASP    & Nystr\"{o}m   & LSC       & CESC              \\
    \hline
    Segmentation  & 2.26    & 6.25          & 8.87      & \textbf{2.06}	    \\
    Spambase      & 25.33   & 28.26         & 46.68     & \textbf{16.32}	\\
    Musk          & 178.24  & 110.87        & 154.09    & \textbf{63.18}	\\
    %\hline
    Pen Digits    & 65.33   & 104.01        & 119.04    & \textbf{12.46}    \\
    Letter Rec    & 236.43  & 529.86        & 395.47    & \textbf{59.45}    \\
    Connect4      & 3400.38 & 10997.14      & 3690.86   & \textbf{1839.59}  \\
    \hline
  \end{tabular}
  \label{tab:table4}
\end{table*}

From the complexity analysis in Section \ref{section:complexity}, we can see that the bottleneck of CESC is the total running time of graph creation and $k$-means steps. This is clearly shown in the results in Table \ref{tab:table5}, which presents the details in percentage of the running time of CESC for each dataset. The running time of the embedding step was dominated by the other two steps. The advantage is that there have been many studies in fast $k$-means and graph creation, or techniques to parallel them which we can make use of \cite{chen2010}. %On the contrary the main cost of KASP, Nystr\"{o}m, and LSC involves the creating of the eigenspace which is already accelerated.

\begin{table}
  \centering
  \caption{Time distribution for CESC. The bottleneck of the algorithm is the total running time of graph creation and $k$-means steps.}
  \begin{tabular}{| l | r | r | r|}
    \hline
    Datasets        & Graph            & Embedding    & $k$-means   \\
    \hline
    Segmentation    & 54.0\%           & 31.1\%       & 14.9\%       \\
    Spambase        & 90.1\%           & 9.1\%        & 0.8\%       \\
    Musk            & 92.6\%           & 6.9\%        & 0.5\%       \\
    Pen Digits      & 51.1\%           & 33.0\%       & 15.8\%      \\
    Letter Rec      & 36.5\%           & 17.0\%       & 46.5\%      \\
    Connect4        & 97.1\%           & 2.7\%        & 0.2\%       \\
    \hline
  \end{tabular}
  \label{tab:table5}
\end{table}

\subsection{Parameter sensitivity}
As we have already mentioned, $k_{RP}$ is small in practise and there is not much differences between different datasets. \cite{venkatasubramanian2011} suggested that $k_{RP}=2\ln{n}/0.25^2$ which is just about 500 for a dataset of ten millions points. We conducted an experiment with different $k_{RP}$ in each dataset. The results in Figure \ref{fig:kRP} show that the parameter $k_{RP}$ is quite small since the accuracy curve is flat when $k_{RP}$ reaches a certain value (other datasets also have similar tendency). It shows that our $k_{RP}=50$ was suitable for the datasets in the experiments. Moreover, experiments in last sections show that the graph creation is the most dominant step and the running time of CESC is significantly faster than all the others. Therefore, $k_{RP}$ can be quite small and does not considerably affect the running time of CESC. This is another advantage of CESC since it is not sensitive to the parameters in terms of both accuracy and performance. For sampling based methods, the selection of the number of representatives to balance between accuracy and speed is not trivial.

\begin{figure*}
	\centering
    \subfloat[Spambase]{\label{fig:spam_kRP}\includegraphics[width=0.3\textwidth]{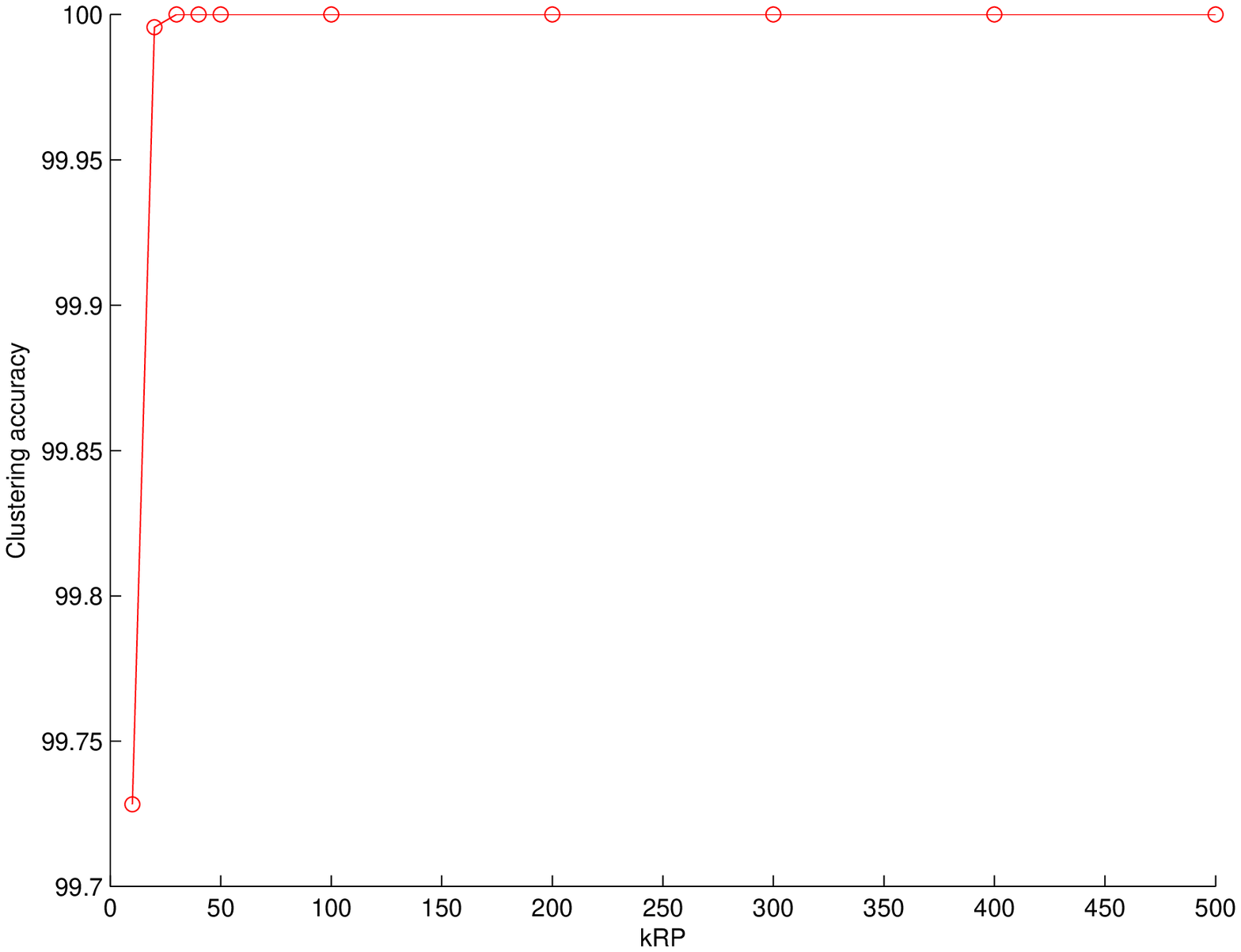}}
    \subfloat[Musk]{\label{fig:musk_kRP}\includegraphics[width=0.3\textwidth]{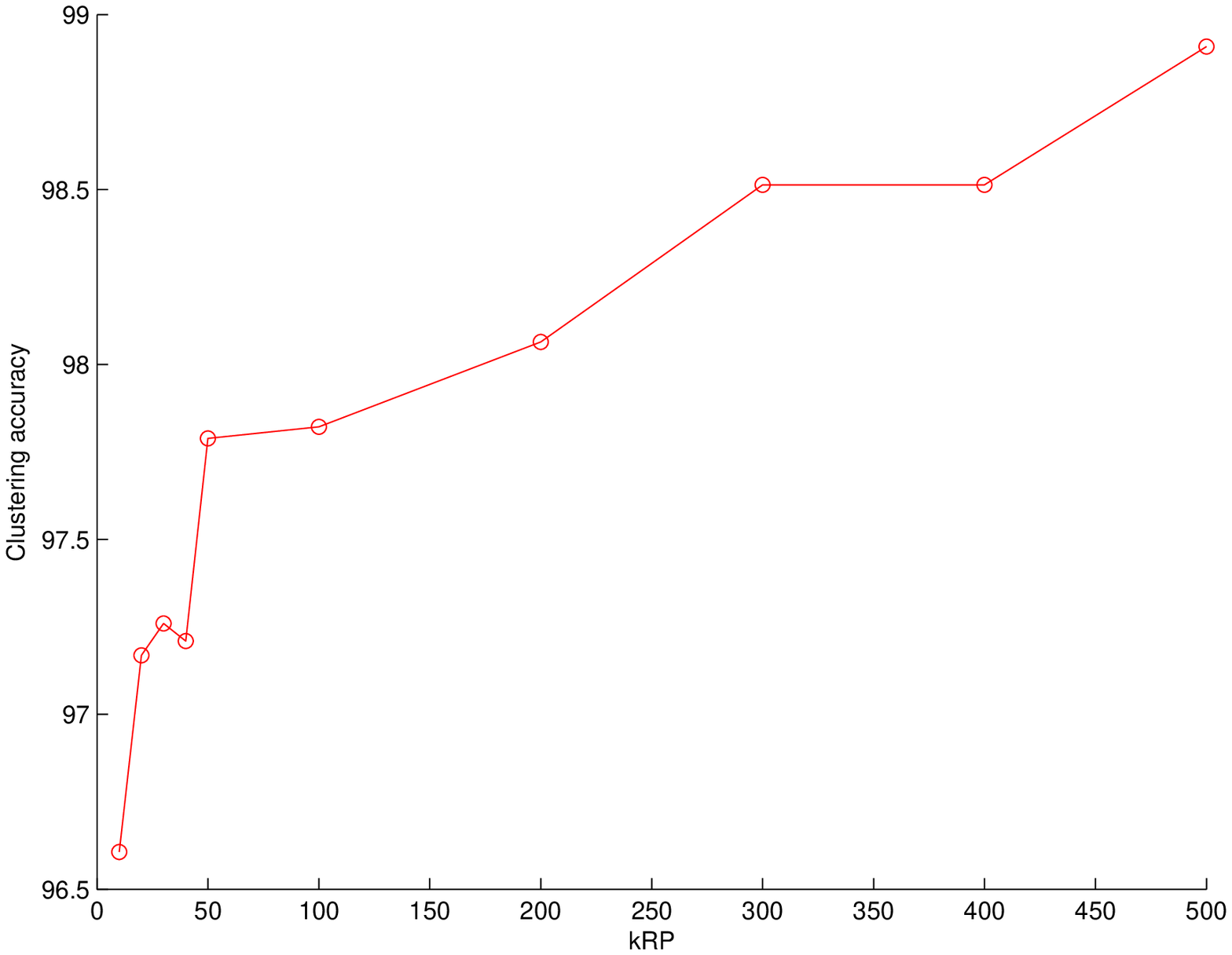}}
    \subfloat[Pen Digits]{\label{fig:pendigits_kRP}\includegraphics[width=0.3\textwidth]{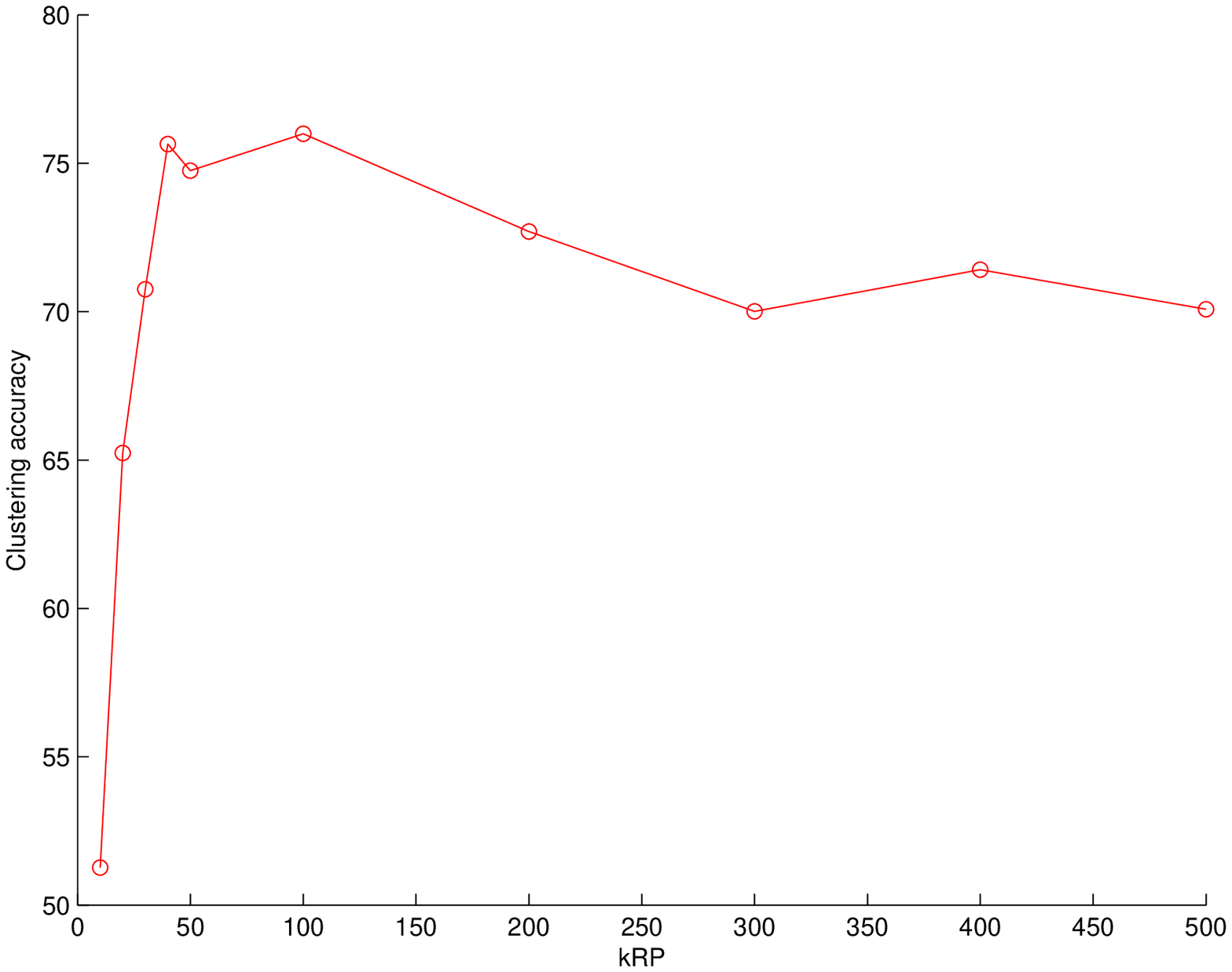}}
    \caption{$k_{RP}$ can be quite small since the accuracy curve just slightly changes when $k_{RP}$ reaches a certain value.}
	\label{fig:kRP}
\end{figure*}

%\begin{figure}
%	\centering
%    \includegraphics[width=0.5\textwidth]{spam_rep.eps}
%	\caption{nRepresentative tends to increase when the sample size increases.}
%	\label{fig:spam_rep}
%\end{figure}

\subsection{Graph Datasets}
One more advantage of CESC over KASP, Nystr\"{o}m, and LSC is that it can work directly on the similarity graph while the others cannot since they have a sampling step on the original feature data. An experiment to show the scalability of the proposed method in large graphs was conducted in DBLP co-authorship network obtained from \emph{http://dblp.uni-trier.de/xml/} and some real network graphs obtained from the Stanford Large Network Dataset Collection which is available at \emph{http://snap.stanford.edu/data/}. \emph{CA-AstroPh} is a collaboration network of Arxiv Astro Physics; \emph{Email-Enron} is an email communication network from Enron company; and \emph{RoadNet-TX} is a road network of Texas in the US.

All the graphs were undirected. The largest connected component was extracted if a graph data was not connected. We arbitrarily chose 50 as the number of clusters for all the datasets. The results using CESC are shown in Table \ref{tab:table6}.

\begin{table*}
  \centering
  \caption{The clustering time (second) for some network graphs. CESC took less than 10 minutes to create an approximate embedding for the network graph of more than 1.3 million nodes.}
  \begin{tabular}{| l | r |r|r|r|r|}
    \hline
    Dataset     & Nodes     & Edges     & Embedding & $k$-means & Total time (s)    \\
    \hline
    CA-AstroPh  & 17,903    & 197,001   & 24.36     & 50.62     & 74.98         \\ % 10 trials
    Email-Enron & 33,696    & 180,811   & 27.33     & 167.08    & 194.41        \\ % 10 trials
    %DBLP        & 344,800   & 1,158,824 & 275.66    & 3353.25   & 3628.91       \\ % 5 trial
    DBLP        & 612,949   & 2,345,178 & 764.04    & 4572.25   & 5336.31       \\ % 5 trials
    RoadNet-TX  & 1,351,137 & 1,879,201 & 576.62    & 4691.53   & 5268.15       \\ % 5 trial
    \hline
  \end{tabular}
  \label{tab:table6}
\end{table*}

In case of graph data, the running time of $k$-means was dominant the whole method. CESC took only less than 10 minutes to create an approximate embedding for the network graph of more than 1.3 million nodes.

\subsection*{DBLP case study}
Since all the above graphs are too big to do a qualitative analysis, a subset of main data mining conferences in the DBLP graph was analyzed. We selected only authors and publications appearing in KDD, PKDD, PAKDD, ICDM, and SDM. Each author also need to have at least 10 publications and his/her co-authors also need to have such minimum publications. This selected only authors who published highly in major data mining conferences and collaborated with the similar kind of co-authors. Then the biggest connected component of the graph was extracted. The final graph has 397 nodes and 1,695 edges.% The average node degree is 4.27 and the average node weight is 10.48.

CESC was applied to the subgraph with $k=50$ clusters. Since researchers have collaborated and moved from research groups to research groups overtime, some clusters are probably a merge of groups caused by the collaborations and movings of prominent researchers. However, the method can effectively capture clusters representing many well known data mining research groups in CMU, IBM Research Centers (Watson and Almaden), University of California Riverside, LMU Munich, University of Pisa, University of Technology Sydney, Melbourne University, etc.

\section{Discussion}
\label{chapter:discussion}
Von Luxburg, Radl, and Hein in their paper \cite{luxburg2010} showed that the commute time between two nodes on a random geometric graph converges to an expression that only depends on the degrees of these two nodes and does not take into account the structure of the graph. Therefore, they claimed that it is meaningless as a distance function on large graph. However, their results do not reject our work because of the following reasons.
\begin{itemize}
    \item Their proof was based on random geometric graphs which may not be the case in practise. The random geometric graph does not have natural clusters which clustering algorithms try to detect. Moreover, there were many assumptions for the graph so that their claim can hold.
    \item Their experiments showed that the approximation becomes worse when the data has cluster structure. However, the condition for an unsupervised distance-based technique can work well is the data should have a cluster structure so that the separation based on distance is meaningful. We believe that many real datasets should have cluster structures in a certain degree.
    %\item Our experiments show that CESC had a good approximation to spectral clustering and thus is not meaningless in several real datasets. We also tried using the authors' suggestion of the correction of the commute time. There was no improvement in the clustering accuracy with the correction. It shows that approximate commute time embedding method can still be potential for using as a fast and accurate approximation of spectral clustering.
    \item Our experiments show that CESC had a good approximation to spectral clustering and thus is not meaningless in several real datasets. It shows that approximate commute time embedding method can still be potential for using as a fast and accurate approximation of spectral clustering.
\end{itemize}

As already mentioned in the experiments of real feature data and graph data, CESC has the bottleneck at the creation of the nearest neighbor graph and $k$-means algorithm. The cost to create the embedding is actually very small comparing to the whole cost of the algorithm. Once we have the embedding, we can choose any fast partition or hierarchical clustering techniques to use on that. \cite{chen2010} proposed methods to improve the cost of creating the nearest neighbor graph and $k$-means in both centralized and distributed manners. Therefore, we believe CESC can be improved a lot more using these techniques. However, it is beyond the scope of this work.

\section{Conclusion}
\label{chapter:conclusion}
The paper shows the clustering using approximate commute time embedding is a fast and accurate approximation for spectral clustering. The strength of the method is that it does not involve any sampling technique which may not correctly represent the whole dataset. It does not need to use any eigenvector as well. Instead it uses the random projection and a linear time solver which guarantee its accuracy and performance. The experimental results in several synthetic and real datasets and graphs with various sizes show the effectiveness of the proposed approaches in terms of performance and accuracy. It is faster than the state-of-the-art approximate spectral clustering techniques while maintaining better clustering accuracy. The proposed method can also be applied directly to graph data. It takes only less than 10 minutes to create the approximate embedding for a network graph of more than 1.3 million nodes. Moreover, once we have the embedding, the proposed method can be applied to any application which utilize the commute time such as image segmentation, anomaly detection, and collaborative filtering.

In the future, techniques to avoid the bottleneck of CESC including the acceleration of the graph creation and $k$-means will be investigated. Moreover, though the analysis and experimental results show that CESC and spectral clustering have quite similar clustering ability, a deeply theoretical analysis need to be done to examine the strength and weakness of each method against the other.

\section*{Acknowledgement}
The authors of this paper acknowledge the financial support of the Capital Markets CRC.

\bibliographystyle{plain}
\bibliography{CommuteDistance_SDM2012}

\end{document}